%% file: main.tex
\documentclass[cameraready]{Interspeech}

\title{AGRI-Fidelity: Evaluating the Reliability of Listenable Explanations for Poultry Disease Detection}

\author[affiliation={}]{Sindhuja}{Madabushi}
\author[affiliation={}]{Arda}{Dogan}
\author[affiliation={}]{Jonathan}{Liu}
\author[affiliation={}]{Dian}{Chen}
\author[affiliation={}]{Dong S.}{Ha}
\author[affiliation={}]{Sook}{Shin}
\author[affiliation={}]{Sam H.}{Noh}
\author[affiliation={}]{Jin-Hee}{Cho}

\address{
Virginia Tech 
}

\email{\{msindhuja, arda, jliu27, dianc, dha, sook, samhnoh, jicho\}@vt.edu}

\keywords{speech recognition, human-computer interaction, computational paralinguistics}

\usepackage{comment}
\usepackage{algorithm}
\usepackage{algpseudocode}
\usepackage{amsthm}
\newtheorem{theorem}{Theorem}
\usepackage{graphicx}
\usepackage{subcaption}
\usepackage{tikz}
\usepackage{multirow} 

\algnewcommand\algorithmicinput{\textbf{Input:}}
\algnewcommand\algorithmicoutput{\textbf{Output:}}
\algnewcommand\Input{\item[\algorithmicinput]}
\algnewcommand\Output{\item[\algorithmicoutput]}

\begin{document}

\maketitle
\begin{abstract}
Existing XAI metrics measure faithfulness for a single model, ignoring model multiplicity where near-optimal classifiers rely on different or spurious acoustic cues. In noisy farm environments, stationary artifacts such as ventilation noise can produce explanations that are faithful yet unreliable, as masking-based metrics fail to penalize redundant shortcuts. We propose \texttt{AGRI-Fidelity}, a reliability-oriented evaluation framework for listenable explanations in poultry disease detection without spatial ground truth. The method combines cross-model consensus with cyclic temporal permutation to construct null distributions and compute a False Discovery Rate (FDR), suppressing stationary artifacts while preserving time-localized bioacoustic markers. Across real and controlled datasets, \texttt{AGRI-Fidelity} effectively provides reliability-aware discrimination for all data points versus masking-based metrics.
\end{abstract}

\section{Introduction}
\label{sec:introduction}
Animal disease detection is critical for livestock health monitoring and timely intervention. With the increasing availability of poultry audio from commercial farms, deep learning models have proven effective at identifying disease-related vocalization patterns \cite{yajie2023poultry,vidic2017advanced}. However, despite strong predictive performance, these models remain black boxes, limiting adoption in agricultural settings where expert validation and trust are essential \cite{handelman2019peering}.

Explainable AI (XAI) methods address this limitation by identifying the acoustic features driving individual predictions. In audio tasks, listenable explanations enhance interpretability by enabling stakeholders to directly hear the time–frequency regions deemed important \cite{paissan2024listenable,wullenweber2022coughlime,parekh2022listen}. However, in noisy farm environments, explanations that are faithful to a model may still lack reliability. Continuous background artifacts such as ventilation hum or machinery noise are persistent, allowing models to exploit them as predictive shortcuts. XAI methods may then faithfully highlight these domain-irrelevant signals, creating a false sense of interpretability. Moreover, near-optimal models can depend on different acoustic cues, producing inconsistent explanations for the same input and further weakening trust.

\noindent \textbf{Why Are Existing XAI Metrics Insufficient for Reliable Evaluation?}
Existing XAI evaluation metrics~\cite{fresz2024classification,banerjee2022methods}, including faithfulness~\cite{li2023mathcal} and fidelity-based variants~\cite{miro2025comprehensive}, primarily assess whether an explanation aligns with a model’s internal decision process. However, they do not quantify whether the reliance is task-relevant or driven by spurious correlations. Consequently, these metrics can assign high scores to explanations that faithfully reflect model behavior yet provide little assurance of reliability or domain validity. This gap is particularly problematic in safety-critical applications, where stakeholders require principled guarantees about explanation trustworthiness.

In bioacoustic animal disease detection, models may achieve high predictive accuracy by exploiting confounding factors such as persistent background noise, microphone placement, recording equipment characteristics, or time-of-day effects, rather than physiologically meaningful vocal patterns. Because standard metrics assess internal consistency rather than statistical validity, they may assign high scores even when explanations highlight domain-irrelevant artifacts \cite{agrawal2026barriers}. Moreover, explanations can differ across near-optimal models, exposing instability that single-model metrics fail to capture \cite{marx2023but}.

Compounding this issue, obtaining ground-truth feature-importance annotations is challenging in real-world farm environments. Distance- or overlap-based metrics that assume precise spatial or temporal masks are therefore impractical for noisy, unconstrained bioacoustic data \cite{stodt2023novel,kadir2023evaluation}. These limitations underscore the need for an evaluation framework that moves beyond single-model faithfulness toward statistically grounded cross-model reliability assessment.

\noindent \textbf{Our Approach.} To address these gaps, we propose \texttt{AGRI-Fidelity}, a semantic reliance fidelity metric for evaluating listenable explanations in audio-based poultry disease detection systems. Unlike existing fidelity measures, \texttt{AGRI-Fidelity} augments standard fidelity with an explicit reliability score, quantifying the extent to which a model’s explanations can be trusted. 
\texttt{AGRI-Fidelity} computes a consensus-based stability score across multiple model families to assess explanation consistency, to identify physiologically meaningful signals against stationary artifacts, and integrates stability and fidelity into a single reliability score.

\noindent \textbf{Key Contributions.}
\textbf{(1)} We introduce a \textbf{reliability-centered evaluation paradigm} for XAI in bioacoustic disease detection, distinguishing model-centric faithfulness from task-level trustworthiness. Our formulation moves beyond single-model fidelity and reframes explanation evaluation as a statistically validated cross-model property without spatial ground truth.  \textbf{(2)} We propose \texttt{AGRI-Fidelity}, a unified metric that integrates \textbf{cross-model consensus} with fidelity-based causal validation. By stratifying agreement levels across architecturally diverse near-optimal models, the framework quantifies explanation stability within the consensus rather than relying on a single model instance.  \textbf{(3)} We develop a \textbf{tier-specific permutation-based null construction} using cyclic temporal shifts to estimate empirical False Discovery Rates (FDR). This design statistically suppresses continuous stationary artifacts while preserving sparse, time-localized physiological signals, without requiring retraining or external annotations.  \textbf{(4)} We provide \textbf{theoretical guarantees} demonstrating that stationary artifacts asymptotically yield FDR $\rightarrow 1$ (complete suppression), whereas sparse time-locked signals yield FDR $\rightarrow 0$, formally justifying the robustness of the proposed reliability formulation.  \textbf{(5)} We demonstrate through extensive experiments on real-world and controlled poultry vocalization datasets that \texttt{AGRI-Fidelity} consistently achieves \textbf{reliable differentiation between genuine bioacoustic markers and faithful yet domain-irrelevant artifacts}, outperforming standard masking-based evaluation metrics while exhibiting stable committee-level behavior across random initializations.

Collectively, these contributions establish a \textbf{statistically principled, explainer-agnostic framework} for evaluating listenable explanations in noisy real-world bioacoustic settings.

\section{Related Work} \label{sec:related-work}

This section reviews existing XAI evaluation frameworks, analyzes their methodological limitations in noisy bioacoustic settings, and highlights the gaps that motivate \texttt{AGRI-Fidelity}.

\subsection{XAI Evaluation Metrics and Their Limitations}
\noindent \textbf{Single-Model Fidelity Metrics.}
In XAI, fidelity measures the causal relationship between identified features and model predictions. Traditional fidelity metrics mask features deemed important by an XAI method and quantify the resulting confidence change. Core variants include Average Increase (AI) and Average Drop (AD)~\cite{chattopadhay2018grad}. AI computes the fraction of samples for which target-class confidence increases when only the masked-in region is retained, whereas AD measures the average relative confidence decrease when the input is masked. Similarly, Average Gain (AG)~\cite{zhang2024opti} evaluates the relative confidence increase after masking. To reduce local structure disruption, the Iterative Removal of Features (IROF) metric~\cite{rieger2020irof} groups inputs into meaningful segments and replaces flagged regions with a baseline value for efficient segment-level evaluation.

However, because these metrics assess causal alignment within a single model, they do not account for spurious correlations or cross-model instability, motivating a reliability-oriented evaluation framework such as \texttt{AGRI-Fidelity}.

\noindent \textbf{In-Distribution Fidelity Evaluation.}
A key limitation of standard masking is the generation of out-of-distribution (OOD) inputs, which can induce model shock and distort fidelity measurements. The Remove and Retrain (ROAR) framework~\cite{hooker2019benchmark} addresses this by removing important features and retraining the model to reassess feature importance, though at significant computational cost. To reduce this burden, the Remove and Debias (ROAD) framework~\cite{rong2022evaluating} avoids retraining and mitigates information leakage arising from mask structure. F-Fidelity~\cite{zheng2024f} similarly seeks to correct OOD-induced evaluation artifacts. Alternatively, Luss et al.~\cite{luss2021leveraging} generate contrastive explanations by projecting inputs into a latent space and identifying in-distribution modifications that flip predictions. 

While these approaches mitigate OOD bias, they remain focused on single-model behavior and do not address cross-model instability or statistically persistent spurious correlations.

\noindent \textbf{Formalizing XAI Evaluation Robustness.}
Beyond empirical masking, several works formalize explanation evaluation mathematically. Infidelity~\cite{yeh2019fidelity} grounds faithfulness in a Taylor expansion, requiring the explanation vector to approximate a local linear predictor of model behavior. Structural robustness metrics instead focus on explanation volatility. Sensitivity~\cite{yeh2019fidelity} uses adversarial optimization to measure the maximum divergence between explanation maps for nearly identical inputs, while Evaluation Stability~\cite{alvarez2018robustness} quantifies consistency via a local Lipschitz constant. Chalasani et al.~\cite{chalasani2020concise} further analyze faithfulness violations, showing that model smoothness through adversarial training is necessary for stable feature attribution.

However, these formulations primarily assess local smoothness and perturbation stability within a single model and do not address cross-model disagreement or statistically persistent spurious correlations in real-world noisy environments.

\noindent \textbf{Distance and Aggregation-Based Metrics.}  Some approaches improve explanation quality through aggregation or controlled benchmarking. Bhatt et al.~\cite{bhatt2020evaluating} aggregate multiple XAI methods for a single model to reduce complexity and produce smoother saliency maps. CLEVR-XAI~\cite{arras2022clevr} leverages synthetic datasets with known spatial ground-truth masks, enabling direct distance- or overlap-based evaluation.

However, aggregation does not resolve model-specific bias, and synthetic benchmarks with perfect masks are rarely available in real-world bioacoustic settings, limiting distance-based evaluation in noisy environments.

\subsection{Cross-Model Consensus and Its Limitations}

To address limitations of single-model evaluation, recent work explores model multiplicity and the Rashomon set~\cite{schwarzschild2023reckoning}, arguing that task-relevant features should generalize across near-optimal models. CAMO~\cite{yu2024camo} measures inter-model agreement via prediction and logic consensus using approximate Shapley values, while Li et al.~\cite{li2023cross} aggregate committee explanations and assess similarity to the aggregate baseline. Conformal prediction has also been used to construct uncertainty sets for explanations, providing distribution-free bounds on variability~\cite{marx2023but}. Cross-explainer approaches~\cite{dietz2024agree, thinn2026consensus} analyze agreement among multiple explanation methods applied to a single model.

Although these agreement-based frameworks capture overlap or bounded variability, they primarily quantify similarity rather than statistical legitimacy. Persistent but spurious features, such as stationary background artifacts, may exhibit high inter-model or inter-explainer agreement and thus appear robust. Moreover, similarity-based aggregation does not test whether observed consensus exceeds chance. These gaps motivate a statistically grounded consensus formulation with explicit null modeling, as implemented in \texttt{AGRI-Fidelity}.

\subsection{Limitations of Existing Evaluation and How \texttt{AGRI-Fidelity} Addresses Them}

Existing XAI evaluation frameworks exhibit three core limitations in real-world bioacoustic settings.

\noindent \textbf{(1) Spurious Faithfulness.}
Standard fidelity and in-distribution metrics ignore persistent spurious correlations~\cite{fresz2024classification,banerjee2022methods}. When models exploit stationary artifacts, masking-based metrics validate this reliance rather than flagging it as domain-irrelevant. ROAR~\cite{hooker2019benchmark} mitigates masking bias but requires costly retraining. In contrast, \texttt{AGRI-Fidelity} uses cross-model consensus with permutation-based False Discovery Rate (FDR) to suppress artifact-driven agreement without retraining.

\noindent \textbf{(2) Dependence on Ground-Truth Masks.}
Distance- and overlap-based metrics assume precise spatial or temporal annotations, which are rarely available in noisy farm audio. \texttt{AGRI-Fidelity} removes this requirement by constructing an empirical null via cyclic temporal permutation, enabling reliability assessment without ground-truth feature maps.

\noindent \textbf{(3) Agreement Without Statistical Validation.}
Consensus-based methods quantify inter-model similarity but do not test whether agreement exceeds chance alignment. Persistent artifacts may therefore appear robust due to high overlap alone. \texttt{AGRI-Fidelity} integrates fidelity with tier-specific FDR analysis, ensuring consensus reflects statistically validated causal structure rather than coincidental similarity.

\section{Problem Statement} \label{sec:prob-statement}
The primary challenge in evaluating feature attribution for bioacoustic disease detection is the presence of continuous stationary artifacts such as ventilation systems, that co-exist with the target signal. Formally, let $\mathcal{D} = \{(x_i, y_i)\}_{i=1}^N$ denote a dataset of poultry vocalization recordings $x_i$ with corresponding health labels $y_i \in \{\text{Healthy}, \text{Unhealthy}\}$ collected in real-world farm environments. Let $\mathcal{M} = \{M_1, \dots, M_K\}$ denote a set of trained classification models with comparable predictive performance on $\mathcal{D}$, and let $M^* \notin \mathcal{M}$ be a designated explainer model. Given an explanation method $\mathcal{E}$, we obtain an explanation map for an input $x$ by identifying time-frequency regions $P^*(x) = \mathcal{E}(M^*, x)$ deemed important for the prediction.

The goal of our evaluation metric is to \textbf{determine whether $P^*(x)$ provides a reliable interpretation of the model’s decision.} Existing XAI metrics primarily assess fidelity—whether masking $P^*(x)$ changes the prediction of $M^*$. However, \textit{fidelity alone does not ensure reliability} for two reasons. \textbf{(1) Model Multiplicity.} Near-optimal models $M_k \in \mathcal{M}$ may rely on different acoustic cues, yielding explanations $E(M_k, x)$ that vary substantially. An explanation faithful to $M^*$ may therefore capture model-specific behavior rather than shared task-relevant structure. \textbf{(2) Spurious Correlations in Noisy Environments.} Farm audio is noisy, enabling models to exploit persistent acoustic artifacts that are predictive but physiologically irrelevant. Explanations may faithfully highlight such features while reflecting \textbf{artifact-driven shortcuts} instead of genuine bioacoustic signals.

Therefore, a reliable explanation for audio-based poultry disease detection must satisfy three criteria: \textbf{(1) Fidelity.} It must identify time–frequency regions that causally influence the prediction. \textbf{(2) Stability.} It should be consistent across near-optimal models in the committee $\mathcal{M}$, reflecting task-relevant structure rather than continuous artifacts. \textbf{(3) Domain Alignment.} It must emphasize physiologically meaningful bioacoustic markers over background noise, enabling expert validation.

Accordingly, this work seeks to develop an explanation evaluation framework that jointly quantifies causal fidelity, cross-model stability, and domain alignment, enabling systematic assessment of explanation reliability in noisy, real-world bioacoustic disease detection settings.

\section{\texttt{AGRI-Fidelity}: Reliability-Oriented Evaluation Framework} \label{sec:agri-fidelity}

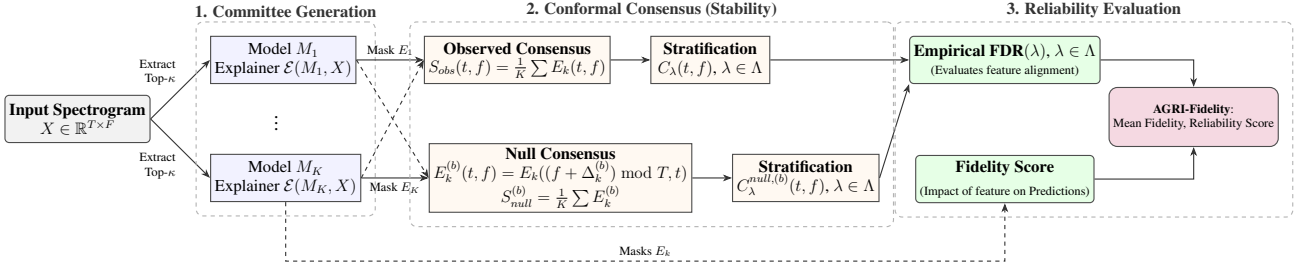
\begin{figure*}[!t]
\centering
\input{figures/proposed_approach_figure.tex}
\caption{\texttt{AGRI-Fidelity} Framework for Reliability-Oriented Evaluation via Tiered Cross-Model Consensus and Permutation-Based FDR Validation.}
\label{fig:flowdiagram}
\end{figure*}

This section presents the proposed reliability-oriented evaluation framework, detailing the construction of cross-model consensus, the permutation-based statistical validation via False Discovery Rate (FDR), and the integration of fidelity to quantify explanation reliability.

\subsection{Tiered Cross-Model Consensus}

Let $\mathcal{M}=\{M_1,\dots,M_K\}$ denote a committee of $K$ near-optimal classification models, where each $M_k$ is a trained classifier operating on spectrogram inputs (in our setting, $K=4$). Let $X \in \mathbb{R}^{T \times F}$ denote the log-magnitude spectrogram of an audio recording, where $T$ represents the number of time frames and $F$ the number of frequency bins.

Given $X$, an explainer $\mathcal{E}$ produces a continuous attribution map $\mathcal{E}(M_k, X) \in \mathbb{R}^{T \times F}$ for each model $M_k$. To isolate the most salient features, we retain only the top-$\tau$ highest-scoring time–frequency bins and binarize the result, yielding a binary explanation mask $E_k \in \{0,1\}^{T \times F}$ for each $M_k$, where $E_k(t,f)=1$ indicates that bin $(t,f)$ is deemed important for the prediction and $E_k(t,f)=0$ otherwise.
We compute the pixel-wise observed consensus score
\begin{equation}
S_{\text{obs}}(t,f) = \frac{1}{K}\sum_{k=1}^{K} E_k(t,f),
\end{equation}
which represents the proportion of models in the committee that identify time–frequency bin $(t,f)$ as salient. Since each $E_k(t,f)\in\{0,1\}$, $S_{\text{obs}}(t,f)$ takes discrete values in $\{0,\frac{1}{K},\dots,1\}$, where $S_{\text{obs}}(t,f)=1$ indicates full agreement and lower values reflect partial or model-specific attribution.
Because each $E_k$ is binary, $S_{\text{obs}}(t,f)$ assumes discrete agreement levels corresponding to the exact number of models selecting bin $(t,f)$. Let
\begin{equation}
\Lambda = \left\{ \frac{1}{K}, \frac{2}{K}, \dots, 1 \right\}
\end{equation}
denote the set of non-zero consensus tiers, where $\lambda = \frac{m}{K}$ indicates agreement among $m$ models. For $K=4$, $\Lambda=\{0.25,0.5,0.75,1.0\}$.
To retain this agreement structure, we stratify $S_{\text{obs}}$ into level-specific binary masks
\begin{equation}
C_{\lambda}(t,f) =
\begin{cases}
1, & S_{\text{obs}}(t,f) \ge \lambda, \\
0, & \text{otherwise},
\end{cases}
\end{equation}
for each $\lambda \in \Lambda$.
This tiered construction preserves the distinction between weak and strong inter-model agreement, preventing high-consensus regions (robust signals) from merging with low-consensus, model-specific attributions, thereby enabling reliability evaluation at different confidence thresholds.

\subsection{Permutation-Based Statistical Validation}

To determine whether cross-model consensus reflects meaningful structure rather than coincidental alignment, we construct an empirical null distribution using cyclic temporal permutation. Recall that $T$ denotes the number of time frames in the spectrogram. For each of $B$ permutation iterations, we independently shift each binary explanation mask $E_k \in \{0,1\}^{T\times F}$ along the time axis by a random offset $\Delta_k^{(b)} \sim \mathcal{U}\{0,\dots,T-1\}$:
\begin{equation}
E_k^{(b)}(t,f) = E_k((t+\Delta_k^{(b)}) \bmod T, f),
\end{equation}
where $b=1,\dots,B$. This operation preserves the marginal structure of each mask while destroying temporal alignment across models.
For each permutation $b$, we compute the corresponding null consensus
$S_{\text{null}}^{(b)}(t,f) = \frac{1}{K}\sum_{k=1}^{K} E_k^{(b)}(t,f),$
which represents agreement under random temporal alignment. Applying the same tiered stratification defined previously yields $C_{\lambda}^{\text{null},(b)}$ for each $\lambda \in \Lambda$.
We then compute the Empirical False Discovery Rate (FDR) for each consensus tier:
\begin{equation}
\text{FDR}(\lambda) =
\frac{\frac{1}{B}\sum_{b=1}^{B}\sum_{t,f} C_{\lambda}^{\text{null},(b)}(t,f) + 1}
{\sum_{t,f} C_{\lambda}(t,f) + 1}.
\end{equation}

Here, the numerator estimates the expected number of pixels reaching agreement level $\lambda$ under the null hypothesis of random temporal alignment, while the denominator counts the observed agreement in the original (unshifted) masks. Continuous stationary artifacts remain invariant under temporal shifts, yielding $\text{FDR}(\lambda)\approx 1$, whereas sparse, time-localized physiological signals lose alignment under permutation, producing $\text{FDR}(\lambda)\approx 0$. Hence, FDR quantifies whether cross-model consensus is statistically distinguishable from random alignment.

\subsection{Reliability Integration via Fidelity}

Statistical consensus alone does not guarantee causal influence on the model’s prediction. A region may exhibit low $\text{FDR}(\lambda=1)$ due to stable agreement yet remain irrelevant to the decision boundary. To verify importance, we also incorporate fidelity.

Formally, let $X \setminus C_{\lambda}$ denote the input after removing bins corresponding to consensus tier $\lambda$. Fidelity measures whether masking these regions degrades the predicted class confidence of model $M_k$. High fidelity indicates that the model actively depends on the selected regions for its prediction.
We define a feature as \textbf{\emph{reliable} only if it satisfies both low FDR (statistical legitimacy of agreement) and high fidelity (causal impact on prediction)}. This joint criterion ensures that cross-model consensus reflects validated task-relevant structure rather than artifact-driven or coincidental overlap. \textbf{Figure \ref{fig:flowdiagram}} shows our overall approach.

\section{Theoretical Guarantees of \texttt{AGRI-Fidelity}}\label{sec:theory}

In this section, we formally justify the Consensus-based metric. We prove that the cyclic-shift null model (i) provably suppresses strictly stationary artifacts by yielding maximal False Discovery Rate (FDR), and (ii) remains highly sensitive to sparse, time-localized bioacoustic signals.

\subsection{Consensus Formulation and Cyclic-Shift Null Model}

Let $X \in \mathbb{R}^{T \times F}$ denote the input spectrogram with $T$ temporal frames and $F$ frequency bins. For a committee of $K$ diverse models, let $B_k \in \{0, 1\}^{T \times F}$ represent the binary importance mask produced by the $k$-th model. The observed committee consensus at pixel $(t,f)$ is defined as
$
S_{\text{obs}}(t,f) = \frac{1}{K} \sum_{k=1}^K B_k(t,f).
$
To quantify accidental agreement among models, we construct a permutation-based null distribution using independent cyclic time shifts. Specifically, for each model $k$, we sample $\delta_k \sim U[0, T]$ and apply the shift along the temporal axis to obtain $B_k^{(b)}(t,f)$. The corresponding null consensus is
$
S_{\text{null}}^{(b)}(t,f) = \frac{1}{K} \sum_{k=1}^K B_k^{(b)}(t,f).
$

\subsection{Guarantee I: Safety Against Stationary Artifacts}

\begin{theorem}[\bf Safety against Stationary Artifacts]
\label{thm:safety}
If the committee models converge on a feature that is strictly stationary (time-invariant), the Consensus algorithm will asymptotically assign a False Discovery Rate (FDR) of $1.0$, resulting in a Reliability Score of $0$.
\end{theorem}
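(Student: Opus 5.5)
We formalize ``strictly stationary feature'' as follows: each committee mask is time-invariant, i.e.\ there are frequency masks $b_k\in\{0,1\}^F$ with $B_k(t,f)=b_k(f)$ for all $t$. The theorem then reduces to a direct computation. Cyclic shifts act trivially on such masks, so the null consensus reproduces the observed consensus exactly. The FDR ratio then equals one apart from the $+1$ regularizers, and these vanish in the asymptotic regime.

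\textbf{Step 1 (invariance).} For any offset $\delta_k$, we have $B_k^{(b)}(t,f)=B_k((t+\delta_k)\bmod T,f)=b_k(f)=B_k(t,f)$. Hence $S_{\text{null}}^{(b)}=S_{\text{obs}}$ pointwise for every $b$ and every realization of the shifts. The tiered masks are a deterministic thresholding of the consensus, so $C_\lambda^{\text{null},(b)}=C_\lambda$ for each $\lambda\in\Lambda$. In particular, the null average $\frac1B\sum_b\sum_{t,f}C_\lambda^{\text{null},(b)}$ equals $N_\lambda:=\sum_{t,f}C_\lambda(t,f)$ exactly. No expectation over the permutation distribution is needed, so the result holds for any $B$.

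\textbf{Step 2 (FDR value).} Substituting into the definition of $\text{FDR}(\lambda)$ gives $\text{FDR}(\lambda)=(N_\lambda+1)/(N_\lambda+1)=1$ exactly, and this holds for every tier. The word ``asymptotically'' is needed only when the feature is stationary up to a perturbation. In that case we would write $N^{\text{null}}_\lambda=N_\lambda+o(N_\lambda)$, where $N^{\text{null}}_\lambda$ denotes the null average from Step 1. Because a stationary band occupies all $T$ frames, $N_\lambda\ge T\cdot|\{f: \text{at least } \lambda K \text{ of the } b_k(f)=1\}|\to\infty$ as $T\to\infty$, and so the ratio tends to $1$. Under exact stationarity, the equality is already exact.

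\textbf{Step 3 (reliability score).} The paper declares a feature reliable only if it has low FDR and high fidelity. We would make the score explicit in a form consistent with this joint criterion, for instance $R(\lambda)=(1-\text{FDR}(\lambda))\cdot\text{Fid}(\lambda)$, or an indicator of $\text{FDR}\le\alpha$ multiplied by fidelity. Any such form is zero when $\text{FDR}=1$, regardless of the fidelity value. This covers the case where the artifact is faithful to the model.

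\textbf{Main obstacle.} The main difficulty is definitional rather than technical. The paper never gives a closed form for the Reliability Score, and it states the shift in two different ways: $U\{0,\dots,T-1\}$ in one place and $U[0,T]$ in another. We would fix the discrete cyclic version, noting that a continuous shift must be rounded to act on frames. We would also state the assumed multiplicative form of the score explicitly. A further subtlety concerns near-stationary artifacts, where $B_k$ varies slightly in $t$. In that case we would bound $|C^{\text{null},(b)}_\lambda-C_\lambda|$ in $\ell_1$ by the number of non-constant time entries, and require this number to be $o(T)$.
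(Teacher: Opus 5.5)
Your proposal is correct and matches the paper's proof: time-invariance makes every cyclic shift act trivially, so each tier's null count equals the observed count, the $+1$-smoothed ratio is exactly $1$, and the reliability score vanishes. The only difference is in Step 3: the paper fixes the score as $R=\max(0,1-\text{FDR})$, which is also zero at $\text{FDR}=1$, so your argument covers it. Your remark that the equality holds for every shift realization, and not only in expectation, is slightly sharper than the paper's $\mathbb{E}[N_{null}(\lambda)]=N_{obs}(\lambda)$.
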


\begin{proof}
\textbf{Assumption of Stationarity:} 
Suppose the detected feature is strictly stationary, meaning the importance mask $E_k$ does not vary with time $t$. Formally, for all models~$k$, we assume $E_k(t,f) = \phi_k(f)$, where $\phi_k(f) \in \{0,1\}$ depends only on the frequency bin $f$ and is constant across all temporal indices.
\noindent \textbf{Oberved Count ($N_{obs}$):} 
Under this assumption, the consensus score at each frequency bin is identical across time. For any consensus threshold $\lambda \in [0, 1)$, the total number of observed positive pixels across the spectrogram is therefore:
\begin{equation}
\begin{aligned}
N_{obs}(\lambda)
&= \sum_{f=1}^{T} \sum_{t=1}^{F}
\mathbf{1}\left(
\frac{1}{K} \sum_{k=1}^K \phi_k(f) \ge \lambda
\right) \\
&= T \sum_{f=1}^{F}
\mathbf{1}\left(
\bar{\phi}(f) \ge \lambda
\right),
\end{aligned}
\end{equation}
where $\bar{\phi}(f) = \frac{1}{K} \sum_{k=1}^K \phi_k(f)$ denotes the average committee agreement at frequency $f$. The factor $T$ arises because each qualifying frequency bin contributes equally across time.

\noindent \textbf{Null Count ($N_{null}$):} 
Consider the temporally shifted mask $B_k(t, f+\Delta_k)$. Because the mask is time-invariant, we have $B_k(t, f+\Delta_k) = \phi_k(f)$ for all shifts $\Delta_k$. Thus, the cyclic permutation does not change the mask structure. Consequently, the shifted consensus equals the observed consensus:
\begin{equation}
S_{\Delta, obs}(t,f) = \frac{1}{K} \sum_{k=1}^K \phi_k(f) = S(t,f).
\end{equation}
It follows that the expected null count matches the observed count for every permutation sample:
\begin{equation}
\mathbb{E}[N_{null}(\lambda)] = N_{obs}(\lambda).
\end{equation}

\noindent \textbf{FDR and Reliability Calculation:} 
Applying the $+1$ smoothing term for numerical stability, the FDR becomes
\begin{equation}
\text{FDR}(\lambda) = \frac{\mathbb{E}[N_{null}(\lambda)] + 1}{N_{obs}(\lambda) + 1} = 1.
\end{equation}
Therefore, the Reliability Score evaluates to $R(t,f) = \max(0, 1 - \text{FDR}) = 0$, meaning the stationary artifact is completely suppressed by the consensus procedure.
\end{proof}

\subsection{Guarantee II: Sensitivity to Sparse Signals}

\begin{theorem}[\bf Sensitivity to Sparse Signals]
\label{thm:sensitivity}
If the committee models converge on a sparse, time-localized signal, the expected FDR approaches $\rho^{K-1}$ (where $\rho \ll 1$ is the sparsity ratio), allowing the feature to be retained with high confidence.
\end{theorem}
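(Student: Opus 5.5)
We model the sparse signal explicitly and compare expected null and observed counts at the full-consensus tier $\lambda = 1$, which is where the $\rho^{K-1}$ scaling arises.

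\textbf{Sparse-signal model.} Suppose every committee model marks the same set of time frames $\mathcal{T}\subset\{1,\dots,T\}$ with $|\mathcal{T}| = \rho T$. At each frequency bin $f$ in the signal's band $\mathcal{F}$ we set $B_k(t,f)=\mathbf{1}(t\in\mathcal{T})$ for all $k$, and $B_k \equiv 0$ outside $\mathcal{F}$. The observed full-consensus count is then
\[
N_{obs}(1) = \rho T\,|\mathcal{F}|,
\]
since every marked pixel is selected by all $K$ models.

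\textbf{Null count.} Under the cyclic-shift null, the shifts $\Delta_k$ are independent and uniform on $\{0,\dots,T-1\}$. Because a cyclic shift is a bijection of $\mathbb{Z}_T$, for a fixed pixel $(t,f)$ with $f\in\mathcal{F}$ the event $B_k^{(b)}(t,f)=1$ has probability exactly $|\mathcal{T}|/T=\rho$. These events are independent across $k$ because the shifts are. Full consensus at that pixel therefore has probability $\rho^K$. By linearity of expectation over the $T|\mathcal{F}|$ pixels in the band,
\[
\mathbb{E}[N_{null}(1)] = \rho^K T|\mathcal{F}|.
\]
The Monte Carlo average over $B$ permutations converges to this value as $B\to\infty$ by the law of large numbers. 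This convergence is the ``asymptotic'' sense of the statement.

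\textbf{FDR ratio.} Substituting into the smoothed FDR gives
\[
\text{FDR}(1)=\frac{\rho^K T|\mathcal{F}|+1}{\rho T|\mathcal{F}|+1}.
\]
This tends to $\rho^{K-1}$ as $T|\mathcal{F}|\to\infty$ with $\rho$ fixed, because the $+1$ terms become negligible. Since $\rho\ll 1$ and $K\ge 2$, the result is close to $0$. The reliability $1-\text{FDR}$ is then close to $1$, so the feature is retained.

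For lower tiers $\lambda=m/K$, the null count per pixel becomes a binomial tail $P(\mathrm{Bin}(K,\rho)\ge m)\approx\binom{K}{m}\rho^m$. This yields the weaker ratio $\approx\binom{K}{m}\rho^{m-1}$, and we would note it only as a remark.

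\textbf{Main obstacle.} The step needing most care is reconciling the stated limit $\rho^{K-1}$ with the smoothing and the modelling assumptions. Two issues arise.
\begin{itemize}
\item The limit requires $\rho T|\mathcal{F}|\gg 1$ while $\rho$ stays fixed. We must state the regime explicitly, e.g.\ $T\to\infty$ with a fixed sparsity ratio.
\item Perfect cross-model alignment is idealized. If the models' masks only partially overlap, $N_{obs}(1)$ shrinks to $|\bigcap_k \mathcal{T}_k|\cdot|\mathcal{F}|$, while the null count stays $\prod_k\rho_k\,T|\mathcal{F}|$.
\end{itemize}
For the second issue, we would first try to state the result under exact alignment. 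We would then add a robustness remark giving $\text{FDR}\approx \prod_k\rho_k/\rho_\cap$, which remains small whenever the common overlap $\rho_\cap$ is not much smaller than the individual $\rho_k$.
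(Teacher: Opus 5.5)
Your proposal is correct and follows essentially the same argument as the paper. Independent uniform cyclic shifts make each shifted mask active at a fixed pixel with probability $\rho$, so full consensus occurs with probability $\rho^K$; this gives an expected null count of $\rho^K T F$ against the observed count $\rho T F$, and hence the ratio $\rho^{K-1}$. Your explicit handling of the $+1$ smoothing (and the regime $\rho T|\mathcal{F}|\to\infty$ it requires), the frequency-band generalization, and the partial-alignment remark refine details the paper passes over with an $\approx$; they do not constitute a different route.
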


\begin{proof}
\textbf{Assumption of Sparsity and Alignment:} Let the true biological signal exist strictly within a brief time window of duration $\tau$, such that $\tau \ll T$. We define the sparsity ratio as $\rho = \tau/T$. Assume perfect committee alignment: all $K$ models correctly identify the feature, setting $E_k(t,f) = 1$ inside the window and $0$ elsewhere.

\noindent \textbf{Observed Count ($N_{obs}$):} Given perfect agreement inside the temporal window, the observed count for maximum consensus ($\lambda=1.0$) is:
\begin{equation}
N_{obs}(1.0) = F \cdot \tau.
\end{equation}

\noindent
\textbf{Null Count ($N_{null}$):} Under the null distribution, independent random shifts $(f+\Delta_k)$ are applied. For a pixel $(t,f)$ to achieve full consensus ($\lambda=1.0$) in the shifted state, all $K$ independent masks must stochastically align at that pixel. The marginal probability that model $k$'s shifted mask is active at time $t$ is exactly its sparsity ratio $\rho$. Assuming the shifts are independent and identically distributed, the joint probability of overlap is: $\mathbb{P}(S_\Delta(t,f) = 1.0) = \rho^K.$
The expected number of false discoveries over the entire spectrogram is therefore:
$\mathbb{E}[N_{null}(1.0)] = F \cdot T \cdot \rho^K.$

\noindent
\textbf{FDR Calculation:} Substituting the expected counts yields:
\begin{equation}
\text{FDR}(1.0) \approx \frac{\mathbb{E}[N_{null}(1.0)]}{N_{obs}(1.0)} = \frac{F \cdot T \cdot \rho^K}{F \cdot \tau} = \frac{T \cdot \rho^K}{\tau}.
\end{equation}
Recognizing that $\tau/T = \rho$, this simplifies to:
\begin{equation}
\text{FDR}(1.0) \approx \frac{\rho^K}{\rho} = \rho^{K-1}.
\end{equation}

\noindent Because the signal is sparse ($\rho \ll 1$) and $K \ge 2$, the term $\rho^{K-1}$ becomes negligible. Consequently, the Reliability Score converges to $R \approx 1 - \rho^{K-1} \approx 1$, preserving the time-locked biological marker.
\end{proof}

\section{Experimental Design \& Setup}

This section describes the datasets, baseline metrics and explainers, and the time–frequency representation and preprocessing configuration used in all \texttt{AGRI-Fidelity} evaluations.
\subsection{Datasets}
\begin{itemize}
    \item \textbf{Poultry Dataset} \cite{aworinde2023poultry}: Audio was collected at 96 kHz with 24-bit resolution over 65 days. For our experiments, we resampled recordings to 22 kHz or 24 kHz as required.
    
    \item \textbf{Poultry Dataset with a Spurious Feature}: A controlled variant constructed by injecting high-frequency noise into all unhealthy samples of the original Poultry Dataset \cite{aworinde2023poultry}. 

    \item \textbf{SmartEars Poultry} \cite{qiao2026smartears}: A 6,000-sample subset of the SmartEars dataset containing five-second clips labeled Healthy, Unhealthy, or None. We retained only the Healthy and Unhealthy samples for analysis.
    
\item \textbf{SmartEars Poultry Denoised}: The same dataset \cite{qiao2026smartears}, processed using Python's \texttt{noisereduce} and Harmonic-Percussive Source Separation (HPSS) for noise reduction.
    
\item \textbf{SwineCough} \cite{smartfarmkorea_swinecough}: Contains dry and abdominal swine cough audio. This clean dataset of cough-only sounds was included to evaluate metric behavior under minimal background noise.
\end{itemize}

\begin{figure*}[t]
\centering

\begin{subfigure}{0.23\linewidth}
    \centering
    \includegraphics[width=\linewidth]{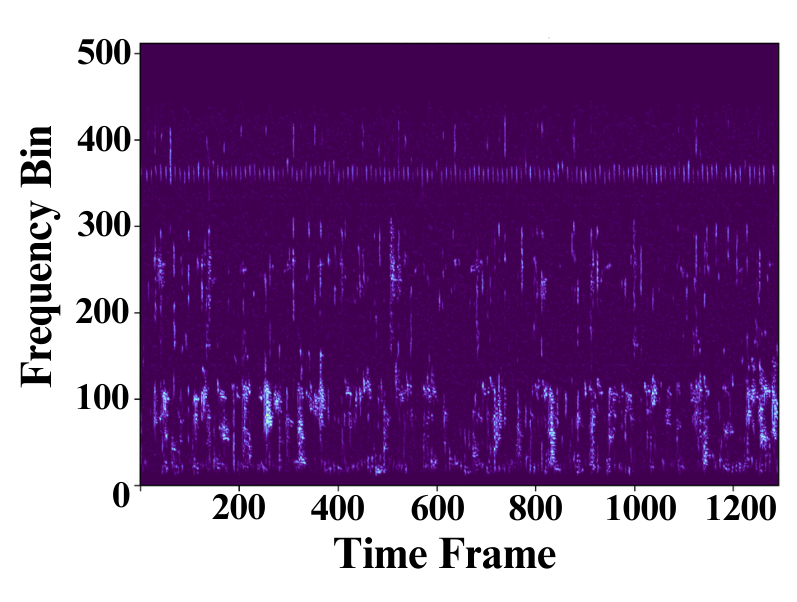}
    \caption{CNN}
\end{subfigure}
\hfill
\begin{subfigure}{0.23\linewidth}
    \centering
    \includegraphics[width=\linewidth]{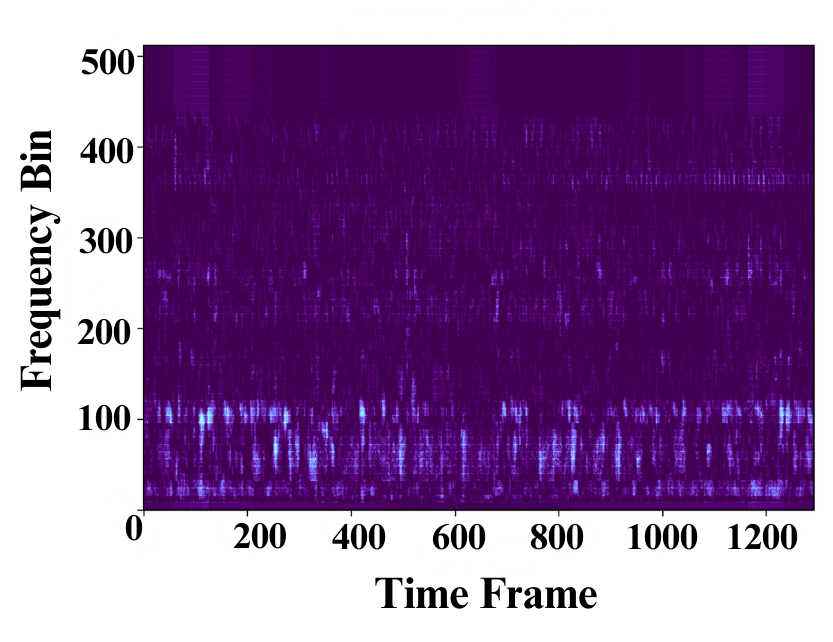}
    \caption{MLP}
\end{subfigure}
\hfill
\begin{subfigure}{0.23\linewidth}
    \centering
    \includegraphics[width=\linewidth]{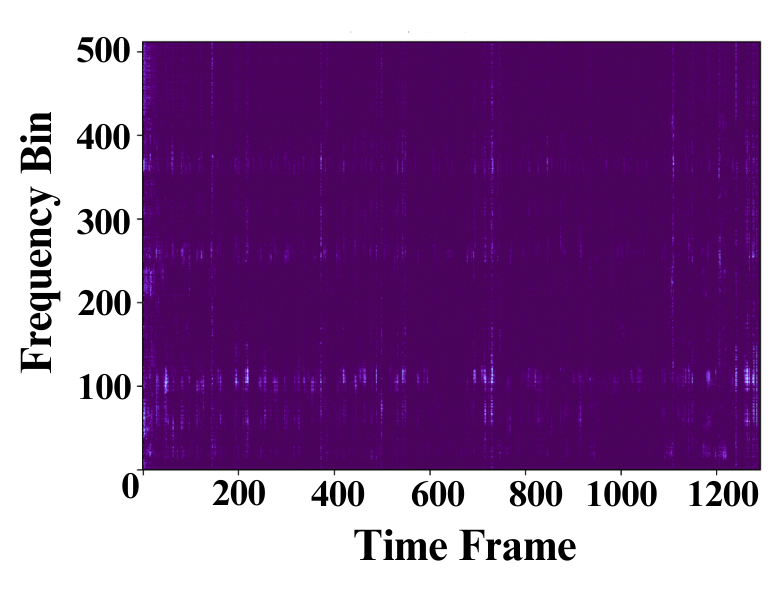}
    \caption{LSTM}
\end{subfigure}
\hfill
\begin{subfigure}{0.23\linewidth}
    \centering
    \includegraphics[width=\linewidth]{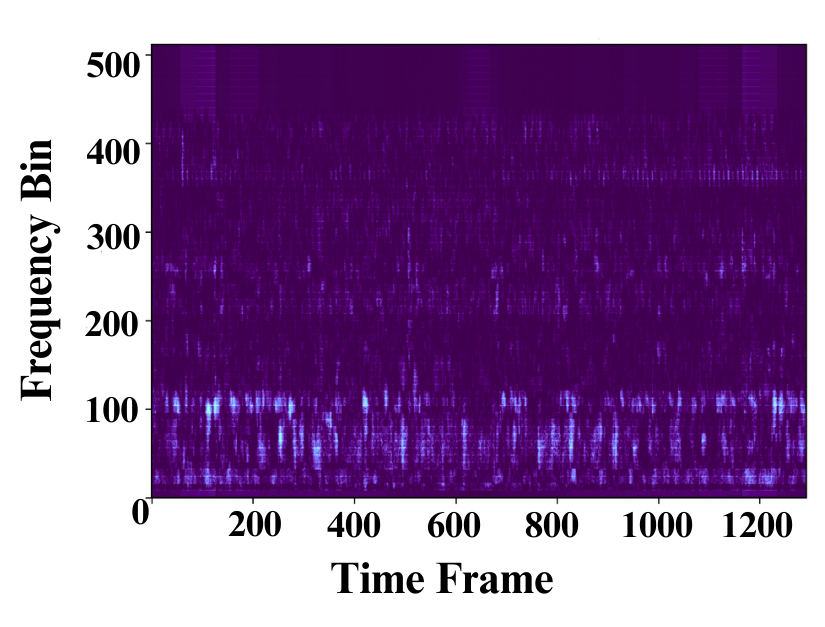}
    \caption{ResNet}
\end{subfigure}

\caption{\textbf{Cross-Model Attribution Consistency:} Integrated gradients for a healthy sample in the denoised poultry dataset across CNN, MLP, LSTM, and ResNet consensus models.}
\label{fig:exampleIGs}
\end{figure*}



\subsection{Baseline Metrics}

We compare \texttt{AGRI-Fidelity} with the following metrics.
\noindent\textbf{(1) Faithfulness~\cite{parekh2022listen}.}
Measures whether explanation-selected features alone preserve the model’s prediction given by:
\begin{equation}
\mathrm{F}_n = p_c(X_n^{\text{int}}).  
\end{equation}

Here, $X_n^{\text{int}}$ retains only explanation-relevant features for sample $n$, and $p_c(\cdot)$ is the predicted probability for class $c$. Larger values indicate stronger preservation.

\noindent\textbf{(2) Fidelity~\cite{tomsett2020sanity}.}
Quantifies the confidence drop after removing the explanation region:
\begin{equation}
\mathrm{FF}_n
=
p_c(X_n)
-
p_c\left(X_n \odot (1 - M_n)\right).
\label{eq:faithfulness}
\end{equation}
$M_n$ denotes the explanation mask; larger positive values indicate greater model reliance on the highlighted region.

\noindent\textbf{(3) Average Increase (AI)~\cite{chattopadhay2018grad}.}
Percentage of samples where confidence increases when explanation region is retained:
\begin{equation}
\mathrm{AI}
=
\frac{1}{N}\sum_{n=1}^{N}
\mathbb{I}\left[p_c(X_n \odot M) > p_c(X_n)\right]
\times 100.
\label{eq:ai}
\end{equation}

\noindent\textbf{(4) Average Drop (AD)~\cite{chattopadhay2018grad}.}
Relative confidence decrease under masking with smaller values being better.
\begin{equation}
\mathrm{AD}
=
\frac{1}{N}\sum_{n=1}^{N}
\frac{\max(0,\, p_c(X_n) - p_c(X_n \odot M))}{p_c(X_n)}
\times 100.
\label{eq:ad}
\end{equation}

\noindent\textbf{(5) Average Gain (AG)~\cite{zhang2024opti}.}
Relative confidence increase after masking with larger values indicating stronger confidence gain.
\begin{equation}
\mathrm{AG}
=
\frac{1}{N}\sum_{n=1}^{N}
\frac{\max(0,\, p_c(X_n \odot M) - p_c(X_n))}{1 - p_c(X_n)}
\times 100.
\label{eq:ag}
\end{equation}

\noindent\textbf{(6) Sparseness~\cite{chalasani2020concise}.}
Measures if predictions depend on a small set of salient features; larger values indicate better explanations.

\noindent\textbf{(7) Complexity~\cite{bhatt2020evaluating}.}
Entropy of attribution scores; smaller values indicate tighter localization.
Anonymized code available here: \url{https://anonymous.4open.science/r/AGRI-Fidelity-AFF3/README.md}.




\subsection{Baseline Explainers} \label{subsec:baseline-explainers}

To evaluate \texttt{AGRI-Fidelity}, we consider four state-of-the-art listenable explanation methods. \textbf{(1) Listen to Interpret (L2I)~\cite{parekh2022listen}.} L2I generates sparse, class-discriminative time–frequency masks using non-negative matrix factorization to create explanations. \textbf{(2) Listenable Maps for Audio Classifier in Time Domain (LMAC-TD)~\cite{mancini2025lmac}.} LMAC-TD is a post-hoc method that trains a decoder to produce explanations directly in the time domain. Both L2I and LMAC-TD were implemented using the SpeechBrain Toolkit~\cite{speechbrain}. \textbf{(3) CoughLIME~\cite{wullenweber2022coughlime}.} CoughLIME adapts LIME to audio by perturbing spectrogram regions and fitting a local surrogate model to explain cough-based predictions. \textbf{(4) AudioLIME~\cite{wullenweber2022coughlime}.} AudioLIME extends this LIME-based approach to general audio, producing localized explanations via surrogate modeling.

\begin{table*}[t]
\centering
\caption{\textbf{Quantitative Comparison of Baseline Explainers Across Standard Metrics and \texttt{AGRI-Fidelity} Components:} Average Increase (AI$\uparrow$), Average Drop (AD$\downarrow$), Average Gain (AG$\uparrow$), Sparseness$\uparrow$, Complexity$\downarrow$, Faithfulness$\uparrow$, and \texttt{AGRI-Fidelity} (Fidelity$\uparrow$ and Reliability$\uparrow$) for the Poultry dataset with a spurious feature.}
\label{tab:baseline_comparison}
\scriptsize
\setlength{\tabcolsep}{4pt}
\resizebox{\textwidth}{!}{
\begin{tabular}{@{}lcccccccc@{}}
\toprule
Baseline 
& AI$\uparrow$ 
& AD$\downarrow$ 
& AG$\uparrow$ 
& Sparse$\uparrow$ 
& Comp.$\downarrow$ 
& Faith.$\uparrow$ 
& \multicolumn{2}{c}{\texttt{AGRI-Fidelity}} \\
\cmidrule(lr){8-9}
 &  &  &  &  &  &  & Fidelity$\uparrow$ & Reliability$\uparrow$ \\
\midrule

LMAC-TD     
& 34.0500 $\pm$ 3.1955 
& 5.3698 $\pm$ 1.6220 
& 21.3799 $\pm$ 0.8626 
& 0.0140 $\pm$ 0.0033 
& 12.2920 $\pm$ 0.0001 
& 0.0739 $\pm$ 0.0243 
& 0.9128 $\pm$ 0.0273 
& 0.4285 $\pm$ 0.3958 \\

L2I       
& 30.1667 $\pm$ 3.9847  
& 7.7010 $\pm$ 1.2752 
& 20.3380 $\pm$ 2.9614 
& 0.0661 $\pm$ 0.0086 
& 12.2993 $\pm$ 0.0019 
& 0.1471 $\pm$ 0.0300 
& 0.5600 $\pm$ 0.0343 
& 0.3503 $\pm$ 0.3832 \\

CoughLIME 
& \textbf{49.8724 $\pm$ 40.7621} 
& 44.5938 $\pm$ 43.4127 
& \textbf{61.0600 $\pm$ 47.8459} 
& 0.3315 $\pm$ 0.2987 
& 13.3498 $\pm$ 0.0031 
& 0.1214 $\pm$ 0.1047 
& 0.1501 $\pm$ 0.0601
& \textbf{0.6715 $\pm$ 0.3289} \\

AudioLIME 
& 27.6923 $\pm$ 3.7621 
& \textbf{3.2820 $\pm$ 3.4127} 
& 2.6643 $\pm$ 2.8459 
& 0.4011 $\pm$ 0.0187 
& 0.2091 $\pm$ 0.0031 
& 0.3619 $\pm$ 0.0368 
& \textbf{0.9157 $\pm$ 0.0214}
& 0.3987 $\pm$ 0.3849 \\

\bottomrule
\end{tabular}
}
\end{table*}

\section{Experimental Results \& Analyses} \label{sec:exp-results-analyses}

This section presents baseline explainer comparisons, consensus model performance across datasets, and analyses of FDR behavior and stability. 
We begin with cross-method evaluation to benchmark explanation quality, then examine architectural robustness and committee-level reliability to characterize stability beyond masking-based metrics.

\subsection{Comparison with Listenable Baseline Explainers}

\noindent \textbf{Table~\ref{tab:baseline_comparison}} compares LMAC-TD, L2I, CoughLIME, and AudioLIME under standard masking metrics and \texttt{AGRI-Fidelity}.

\noindent
\textbf{Interpretability with Reliability Scores.}
Under traditional masking metrics, CoughLIME appears to outperform most baselines, achieving the highest Average Increase (49.87) and Average Gain (61.06). AudioLIME attains the lowest Average Drop (3.28), which is preferred under this metric. Based solely on these masking-based measures, one might conclude that CoughLIME (or partially AudioLIME) provides the most reliable explanations. However, masking metrics evaluate confidence shifts within a single model and do not assess statistical legitimacy across models. Although CoughLIME exhibits large confidence changes, its reliability score (0.6715) reflects only moderate cross-model stability, while other explainers exhibit even lower reliability despite competitive masking performance. This discrepancy indicates that masking-based metrics can reward explanations that exploit redundant or domain-irrelevant artifacts. In contrast, \texttt{AGRI-Fidelity} explicitly evaluates whether highlighted regions are statistically stable and structurally consistent across the model committee. High impact on model confidence does not necessarily imply trustworthiness, and \texttt{AGRI-Fidelity} provides a mechanism for reliability-aware interpretability assessment.

\noindent
\textbf{Decoupling Causal Influence from Statistical Reliability.}
The AGRI-Fidelity fidelity component reveals further separation between causal influence and reliability. AudioLIME achieves the highest fidelity score (0.9157), closely followed by LMAC-TD (0.9128), indicating a strong causal impact on model predictions. In contrast, CoughLIME’s fidelity score (0.1501) is substantially lower despite its strong masking metrics. Meanwhile, Reliability scores vary considerably across methods, with CoughLIME achieving the highest value (0.6715) and others exhibiting lower stability. This divergence demonstrates that causal influence and statistical reliability are orthogonal properties. By introducing Reliability as a separate evaluation axis, \texttt{AGRI-Fidelity} disentangles the magnitude of prediction impact from cross-model structural consistency, yielding a more principled assessment of explanation quality.

\noindent
\textbf{Structurally Inconsistent Attributions Analysis.}
\texttt{AGRI-Fidelity} is sensitive not only to sparsity but to structural alignment across the model committee. Although L2I produces explanations with moderate sparseness (0.0661) and complexity comparable to LMAC-TD, it achieves relatively lower AGRI-Fidelity components, with Fidelity (0.5600) substantially below LMAC-TD (0.9128) and AudioLIME (0.9157), and markedly lower Reliability (0.3503). This demonstrates that \texttt{AGRI-Fidelity} does not reward sparse or compact representations indiscriminately. Even when an explainer produces localized masks, fragmented or model-specific attributions that fail to align across the committee are penalized by the cyclic permutation null model. As a result, \texttt{AGRI-Fidelity} filters out attributions that lack structural agreement, ensuring that higher scores correspond to statistically validated and committee-consistent explanatory patterns rather than isolated saliency artifacts.

\noindent
\textbf{Understanding the High Standard Deviations of LIME-Based Explainers.}
The relatively large standard deviations observed for LIME-based explainers in \textbf{Table~\ref{tab:baseline_comparison}} primarily arise from the properties of the metric formulations and the aggregation strategy. First, Average Increase (AI) is a binary per-sample metric. When the success probability approaches $50\%$, the theoretical standard deviation $100\sqrt{p(1-p)}$ approaches its maximum ($\approx 50$), explaining values such as $49.9 \pm 40.8$ for CoughLIME. Second, Average Drop (AD) and Average Gain (AG) are ratio-based metrics whose denominators depend on the original prediction confidence ($p_{\text{orig}}$ and $1-p_{\text{orig}}$, respectively). When these probabilities approach $0$ or $1$, the ratios become numerically unstable, and a small number of samples can generate large outliers, leading to heavy-tailed distributions. Additionally, LIME-based explainers rely on stochastic perturbation sampling and local surrogate fitting, introducing inherent randomness absent in trained explanation models such as LMAC-TD or L2I.  Because the reported statistics are computed as standard deviations across individual samples rather than random seeds, the dispersion reflects genuine per-sample variability, which also propagates to the Reliability estimates when consensus stability differs substantially across inputs.

\begin{table}[t]
\centering
\caption{\textbf{Performance Comparison of Consensus Architectures Across Datasets:} For each dataset, we report the performance of CNN, MLP, LSTM, and ResNet consensus models. Columns report Accuracy (Acc$\uparrow$), Precision (Prec$\uparrow$), Recall (Rec$\uparrow$), F1$\uparrow$, Specificity (Spec$\uparrow$), and False Positive Rate (FPR$\downarrow$).}
\label{tab:all_models}

\scriptsize
\setlength{\tabcolsep}{6pt}

\resizebox{\columnwidth}{!}{
\begin{tabular}{p{1cm}lcccccc@{}}
\toprule
Dataset & Model & Acc$\uparrow$ & Prec$\uparrow$ & Rec$\uparrow$ & F1$\uparrow$ & Spec$\uparrow$ & FPR$\downarrow$ \\
\midrule
\multirow{4}{*}{\shortstack{\textbf{Chicken}\\\textbf{Denoised}}}
 & CNN     & 0.8654 & 0.9306 & 0.7667 & 0.8231 & 0.9360 & 0.0640 \\
 & MLP     & 0.9096 & 0.9189 & 0.8781 & 0.8954 & 0.9322 & 0.0678 \\
 & LSTM    & \textbf{0.9615} & \textbf{0.9987} & 0.8889 & \textbf{0.9412} & \textbf{0.9998} & \textbf{0.0100} \\
 & ResNet  & 0.9385 & 0.9321 & \textbf{0.9436} & 0.9366 & 0.9313 & 0.0687 \\
\midrule
\multirow{4}{*}{\shortstack{\textbf{Chicken}\\\textbf{Spurious}}}
 & CNN     & \textbf{0.9865} & 0.9805 & \textbf{0.9913} & \textbf{0.9855} & 0.9823 & 0.0177 \\
 & MLP     & 0.9365 & 0.9526 & 0.9036 & 0.9229 & 0.9618 & 0.0382 \\
 & LSTM    & 0.9423 & 0.9412 & 0.8889 & 0.9143 & 0.9706 & 0.0294 \\
 & ResNet  & 0.9500 & \textbf{0.9960} & 0.8962 & 0.9338 & \textbf{0.9964} & \textbf{0.0036} \\
\midrule
\multirow{4}{*}{\shortstack{\textbf{SmartEars}\\\textbf{Clean}}}
 & CNN     & 0.7399 & 0.7170 & 0.7886 & 0.7498 & 0.6901 & 0.3099 \\
 & MLP     & 0.6710 & 0.6741 & 0.6661 & 0.6651 & 0.6756 & 0.3244 \\
 & LSTM    & \textbf{0.8413} & \textbf{0.8797} & 0.7575 & \textbf{0.8141} & \textbf{0.9122} & \textbf{0.0878} \\
 & ResNet  & 0.8065 & 0.7940 & \textbf{0.8299} & 0.8098 & 0.7827 & 0.2173 \\
\midrule
\multirow{4}{*}{\shortstack{\textbf{SmartEars}\\\textbf{Noisy}}}
 & CNN     & 0.8311 & 0.8506 & 0.9236 & 0.8853 & 0.6068 & 0.3932 \\
 & MLP     & \textbf{0.8967} & 0.8973 & 0.9788 & \textbf{0.9363} & 0.6135 & 0.3865 \\
 & LSTM    & 0.8235 & 0.8250 & \textbf{0.9848} & 0.8976 & 0.2391 & 0.7609 \\
 & ResNet  & 0.8816 & \textbf{0.9072} & 0.9460 & 0.9256 & \textbf{0.6532} & \textbf{0.3468} \\
\midrule
\multirow{4}{*}{\textbf{SwineCough}} 
 & CNN     & 0.9613 & 0.9553 & 0.9575 & 0.9558 & 0.9637 & 0.0363 \\
 & MLP     & 0.9694 & 0.9648 & 0.9654 & 0.9650 & 0.9724 & 0.0276 \\
 & LSTM    & 0.9593 & 0.9516 & 0.9555 & 0.9535 & 0.9621 & 0.0379 \\
 & ResNet  & \textbf{0.9817} & \textbf{0.9819} & \textbf{0.9761} & \textbf{0.9790} & \textbf{0.9860} & \textbf{0.0140} \\
\bottomrule
\end{tabular}
}
\end{table}

\begin{figure*}[t]
\centering

\begin{subfigure}{0.19\textwidth}
    \includegraphics[width=\linewidth]{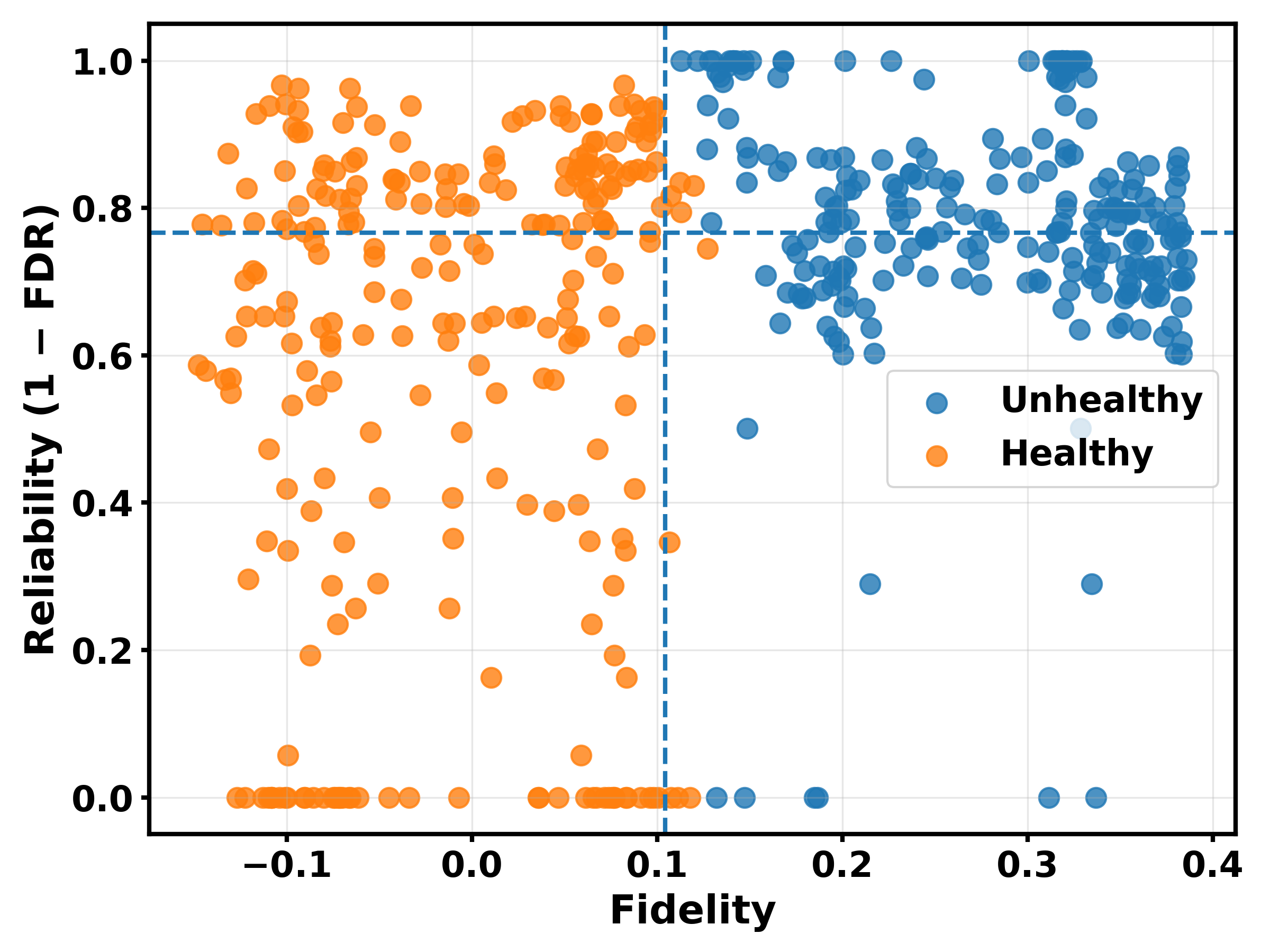}
    \caption{Chicken (Denoised)}
\end{subfigure}
\hfill
\begin{subfigure}{0.19\textwidth}
    \includegraphics[width=\linewidth]{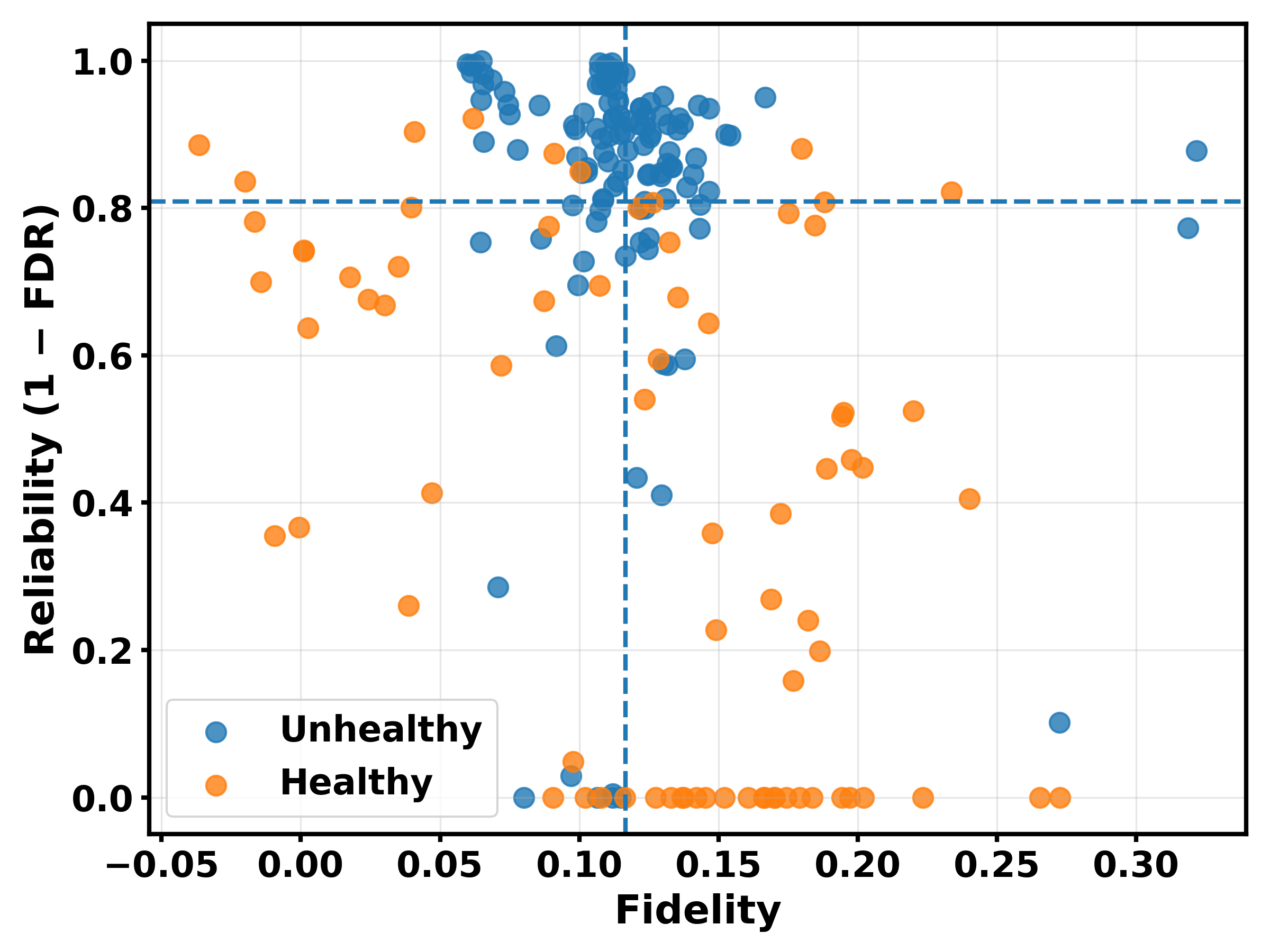}
    \caption{Chicken (Spurious)}
\end{subfigure}
\hfill
\begin{subfigure}{0.19\textwidth}
    \includegraphics[width=\linewidth]{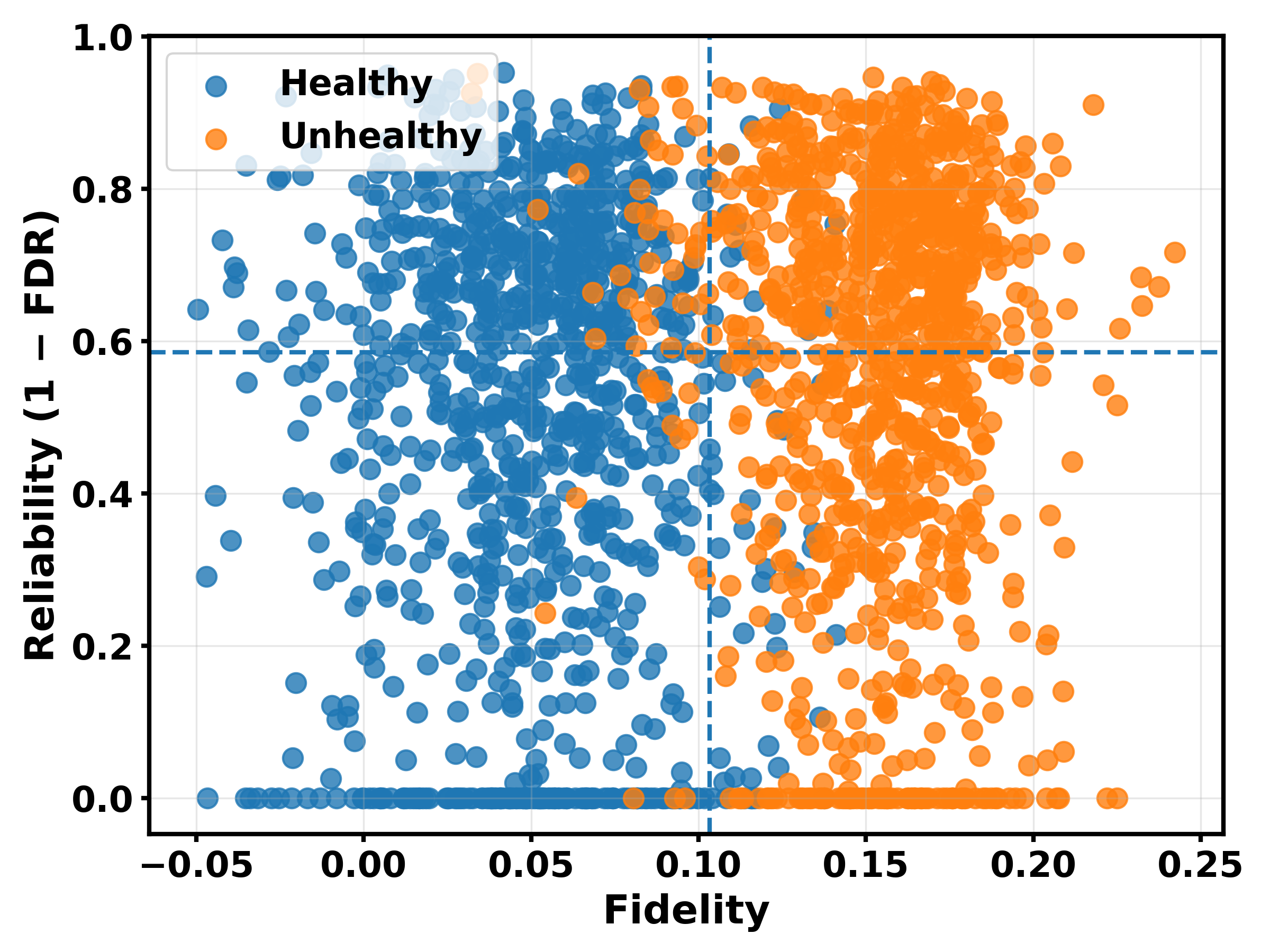}
    \caption{SmartEars (Denoised)}
\end{subfigure}
\hfill
\begin{subfigure}{0.19\textwidth}
    \includegraphics[width=\linewidth]{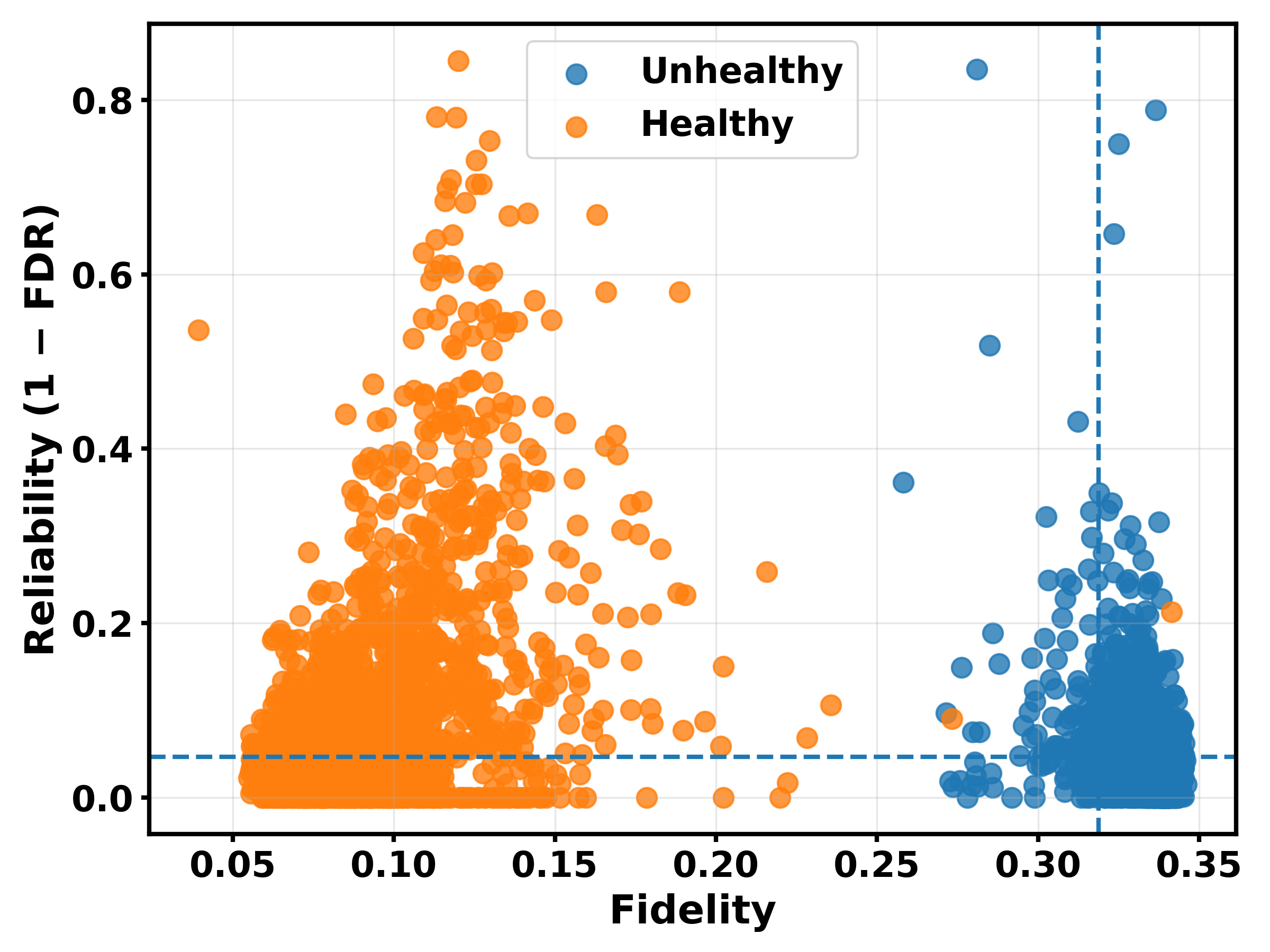}
    \caption{SmartEars (Raw)}
\end{subfigure}
\hfill
\begin{subfigure}{0.19\textwidth}
    \includegraphics[width=\linewidth]{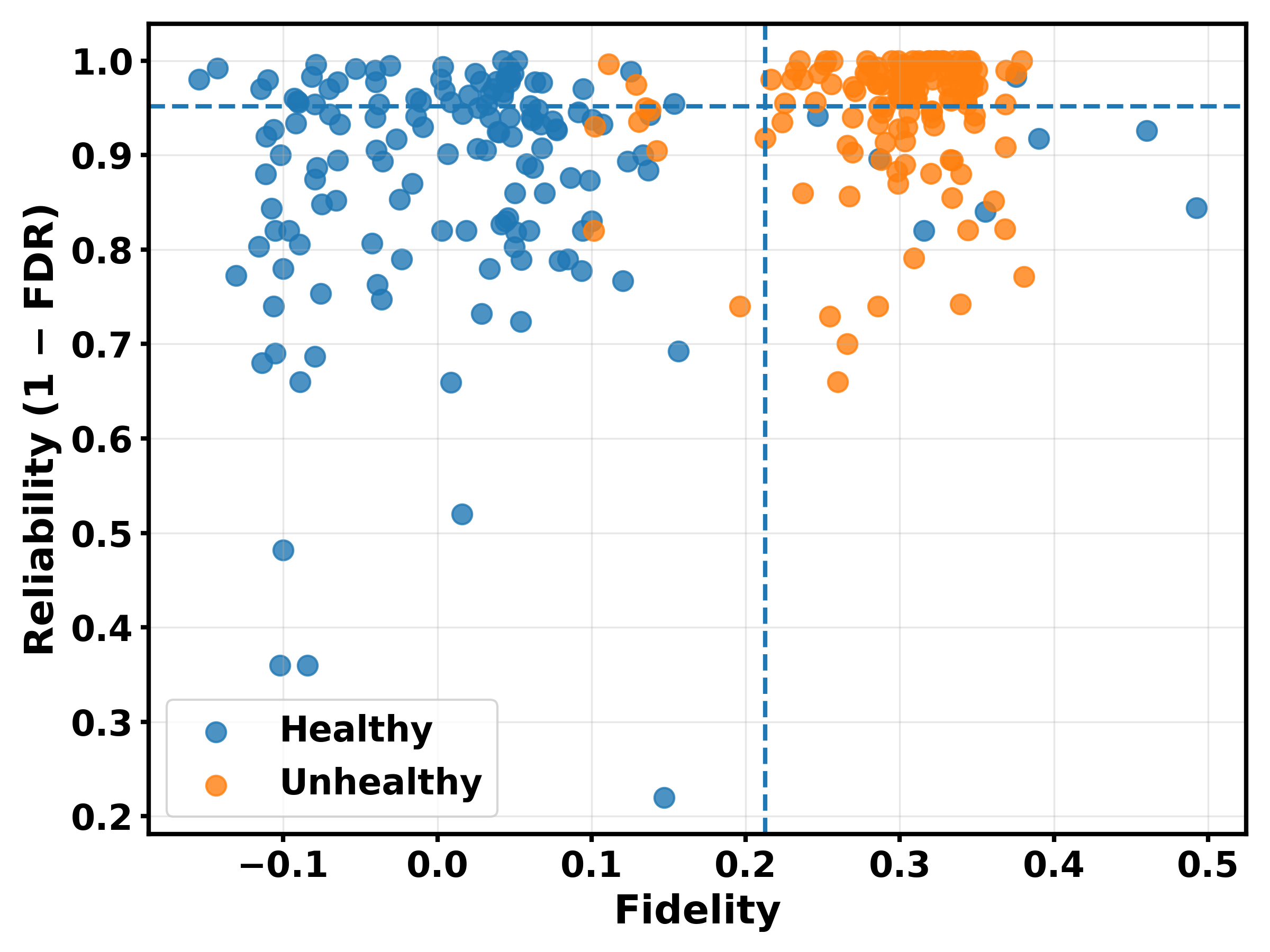}
    \caption{SwineCough}
\end{subfigure}

\caption{\textbf{Fidelity–Reliability Quadrant Distributions Across Datasets.}
Each plot shows per-sample \texttt{AGRI-Fidelity} components using CoughLIME Explainer. Denoised datasets exhibit stronger concentration in the high-fidelity, high-reliability quadrant, whereas noisy datasets demonstrate reliability dispersion under spurious contamination.}
\label{fig:quadrant_all_datasets}

\end{figure*}

\begin{figure}[t]
    \centering
    \includegraphics[width=0.6\linewidth]{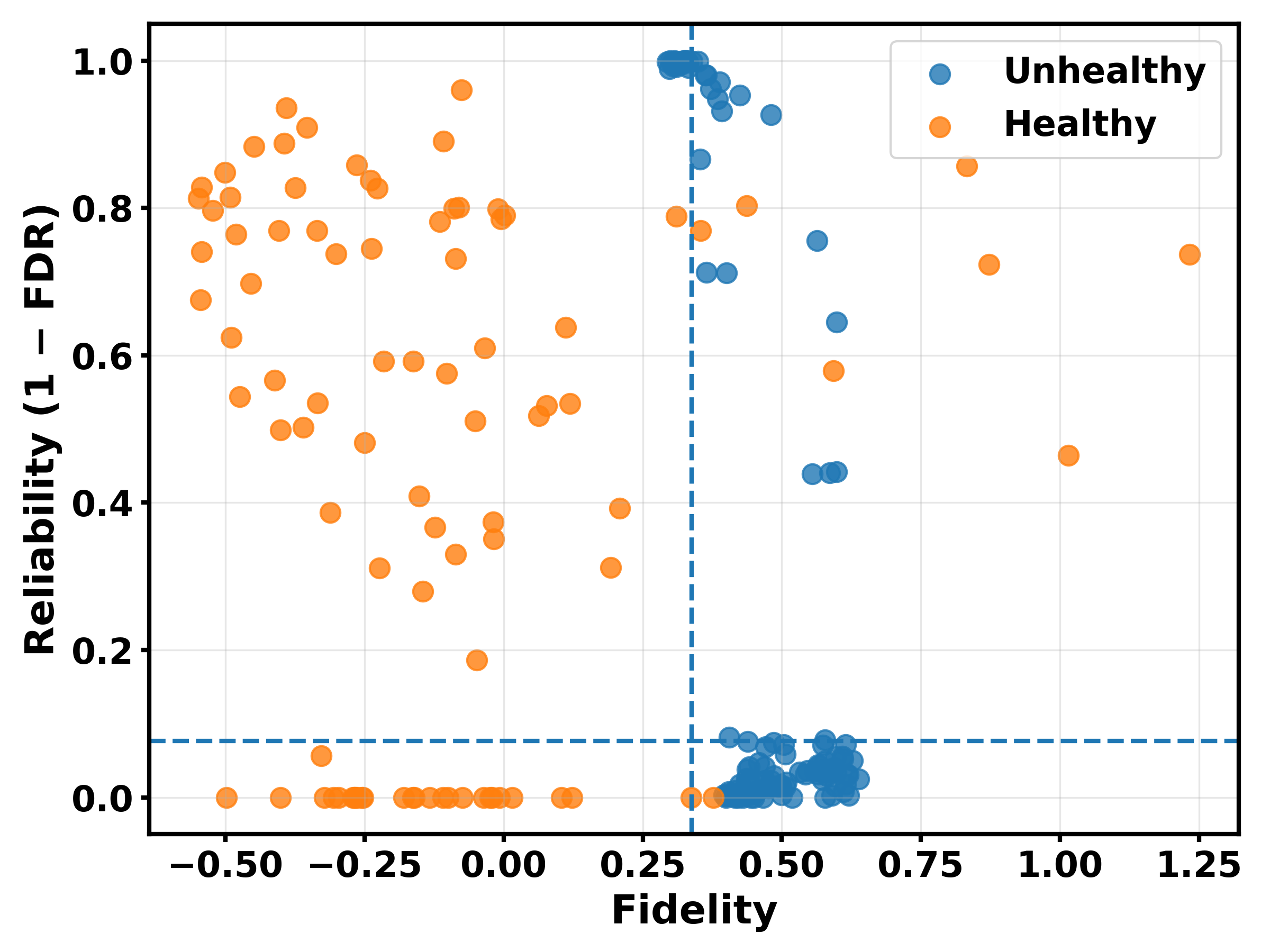}
    \caption{Fidelity–Reliability Quadrant Analysis on the Controlled Spurious Dataset. Each point represents a sample plotted by its mean fidelity and reliability score. Dashed lines denote median splits, forming four interpretative quadrants.}
    \label{fig:quadrant}
\end{figure}

\subsection{Fidelity–Reliability Distributions Across Datasets}

\textbf{Limitations of One-Dimensional Fidelity Evaluation.}
Comparing the Chicken Denoised and Chicken Spurious datasets reveals a limitation of fidelity-only evaluation. Along the fidelity axis, spurious artifacts in the Unhealthy class of the Spurious dataset achieve scores nearly indistinguishable from genuine signals in the Denoised dataset. Introducing Reliability as an orthogonal dimension, \texttt{AGRI-Fidelity} resolves this ambiguity by shifting artifact-driven explanations from the high-quality quadrant to the low-reliability regime.

\noindent
\textbf{Class-Specific Decoupling Under Spurious Injection.}
In \textbf{Figure~\ref{fig:quadrant_all_datasets}(b)}, unhealthy samples exhibit relatively low fidelity but high reliability. The injected spurious cue is continuous and stationary. Because CoughLIME operates locally through perturbations, it evaluates predictions within limited temporal neighborhoods rather than across the full background. As a result, several Unhealthy recordings emphasize sparse acoustic events instead of the globally present artifact. Near-zero FDR values dominate the consensus, yielding high reliability. \textbf{Figure~\ref{fig:quadrant}} further shows reliability computed using only consensus models, where fidelity is higher for Unhealthy samples but reliability remains low for most cases, consistent with \textbf{Table~\ref{tab:fdr_trend_distribution}}.

\noindent
\textbf{Diagnosing Shortcut Learning in Real-World Farm Audio.}
In SmartEars (Denoised), explanations span both high- and low-reliability regions with structured separation. In SmartEars (Raw), explanations collapse toward low reliability despite moderate-to-high fidelity. This indicates shortcut learning in noisy barn environments, where models rely on persistent background noise rather than disease-specific events. \texttt{AGRI-Fidelity} exposes this behavior, whereas masking metrics alone would suggest acceptable performance.

\noindent
\textbf{Preservation of Sparse, Time-Localized Biological Signals.}
On the SwineCough dataset, containing isolated transient cough events with minimal background noise, the model committee converges on time-localized regions, yielding reliability scores between 0.9 and 1.0. This empirically supports Theorem~2, demonstrating that \texttt{AGRI-Fidelity} preserves sparse, biologically meaningful signals without penalization.

\subsection{Consensus Model Performance Across Datasets}

\textbf{Poultry Variants.}
\textbf{Table~\ref{tab:all_models}} summarizes the four consensus architectures. On Chicken Denoised, LSTM achieves the highest accuracy (0.9615) and specificity (0.9998), followed by ResNet (0.9385), MLP (0.9096), and CNN (0.8654), reflecting temporal modeling benefits. On Chicken Spurious, CNN attains the highest accuracy (0.9865), while ResNet achieves the highest specificity (0.9964), indicating robustness to injected artifacts.

\noindent
\textbf{SmartEars.}
On SmartEars Clean, LSTM performs best (0.8413), followed by ResNet (0.8065), CNN (0.7399), and MLP (0.6710). Under SmartEars Noisy conditions, recall remains high but specificity declines, indicating bias from background noise. ResNet maintains the most balanced trade-off.

\noindent
\textbf{SwineCough.}
All models perform strongly on SwineCough, with ResNet achieving the highest accuracy (0.9817) and high specificity, consistent with cleaner acoustic conditions. \textbf{Figure~\ref{fig:exampleIGs}} shows representative IG maps across consensus models.

\begin{table}[t]
\centering
\caption{\textbf{Distribution of FDR Trend Types Across Class Labels:} For each class, we report the total number of samples and the count exhibiting Downward and Near-1 FDR trends. A Downward trend indicates rapid FDR decay with increasing consensus threshold, whereas a Near-1 trend indicates persistently high FDR across consensus levels.}
\label{tab:fdr_trend_distribution}
\scriptsize
\setlength{\tabcolsep}{6pt}
\begin{tabular}{@{}lccc@{}}
\toprule
Class & Total & Downward Trend & Near-1 Trend \\
\midrule
Healthy   & 139 & 120 & 19 \\
Unhealthy & 121 & 40  & 81 \\
\bottomrule
\end{tabular}
\end{table}

\begin{table}[t]
\centering
\caption{\textbf{Average Fidelity Trends Across Classes and Models:} For each class and trend type (Downward, Near-One), we report mean fidelity values for CNN, LSTM, MLP, and ResNet consensus models to enable cross-architecture comparison.}
\label{tab:fidelity_combined}
\scriptsize
\setlength{\tabcolsep}{4pt}
\begin{tabular}{@{}llcccc@{}}
\toprule
Class & Trend & CNN & LSTM & MLP & ResNet \\
\midrule
\multirow{2}{*}{Healthy}
  & Downward  & 0.1589 & 0.0206 & 0.0457 & 0.0159 \\
  & Near-One  & 0.121 & 0.0319 & 0.0137 & 0.0175 \\
\midrule
\multirow{2}{*}{Unhealthy}
  & Downward  & 0.5161 & 0.2194 & 0.1833 & 0.4263 \\
  & Near-One  & 0.7994 & 0.4519 & 0.7040 & 0.8542 \\
\bottomrule
\end{tabular}
\end{table}

\subsection{Consensus Stability and FDR Trend Analysis}

\textbf{Table~\ref{tab:fdr_trend_distribution}} summarizes FDR trend distributions across classes. 
Using the Near-1 trend to identify spurious behavior (Unhealthy as ground truth) yields 81 TP, 19 FP, 40 FN, and 120 TN, corresponding to 80.6\% precision, 66.9\% recall, 86.3\% specificity, and 76.5\% accuracy. These results show that the Near-1 FDR trend detects artifact-dominated explanations with high precision and specificity, while maintaining moderate recall. \textbf{Table~\ref{tab:fidelity_combined}} reports mean fidelity by class and FDR trend. For Unhealthy samples, fidelity remains higher than for Healthy samples across models and trends, indicating committee agreement on the injected continuous artifacts.

\noindent
\textbf{Computational Cost.}
\texttt{AGRI-Fidelity} operates post hoc on pre-trained models. Computing fidelity requires two forward passes per consensus model, while the permutation-based null construction has complexity $\mathcal{O}(B \cdot K \cdot T \cdot F)$. These operations are vectorized and efficient in practice. Although overall runtime exceeds lightweight masking metrics (AI, AD, AG), it remains substantially lower than ROAR, which retrains models for multiple masking thresholds ($\geq K/2$).

\section{Methodological Considerations and Design Rationale}

We outline the following six methodological considerations that clarify the statistical foundation and design principles of \texttt{AGRI-Fidelity}.  \textbf{(1) Choice of auditing explainer.} We adopt Integrated Gradients (IG) as the unified explainer across committee models. Unlike perturbation-based methods (e.g., LIME), which are sensitive to sampling variance and surrogate instability, IG satisfies completeness and implementation invariance. Thus, cross-model disagreement reflects differences in learned representations rather than explainer-induced noise. \noindent \textbf{(2) Explainer-agnostic evaluation.} The comparison between the IG-based committee and decoder- or NMF-based explainers is principled. \texttt{AGRI-Fidelity} evaluates the final $T \times F$ attribution mask independently of how it is generated. Whether derived from gradients, decoding, or surrogate fitting, the framework audits the statistical reliability of the resulting representation. \noindent \textbf{(3) FDR vs. overlap metrics.} We employ False Discovery Rate (FDR) rather than IoU, Precision, or Recall. Overlap-based measures ignore the null distribution and may reward agreement on stationary artifacts. By constructing a cyclic temporal permutation null, FDR quantifies whether consensus exceeds chance alignment, providing a statistically validated reliability criterion. \noindent \textbf{(4) Sampling-rate heterogeneity.} Farm sensors operate at heterogeneous sampling rates, making separate training pipelines impractical. We standardize recordings via resampling while recognizing that aggressive downsampling may suppress high-frequency disease signatures. \noindent \textbf{(5) Absence of spatial ground truth.} Bioacoustic datasets lack pixel-level annotations of disease-relevant regions in time–frequency space. In this setting, cross-model consensus under a permutation-based null serves as a conservative proxy for reliability without requiring explicit ground-truth masks. \noindent \textbf{(6) Continuous artifacts and OOD masking.} Naïve zero-masking in fidelity evaluation may introduce out-of-distribution artifacts. Using in-distribution imputation (e.g., diffusing salient bins to neighboring frequencies) reduces OOD distortions and better isolates genuine causal contributions while suppressing spurious correlations.

\section{Conclusions and Future Work}
\noindent \textbf{Summary of Contributions and Key Findings.}
This work introduces \texttt{AGRI-Fidelity}, a reliability-aware framework for evaluating explanations in bioacoustic disease detection. By modeling chance agreement via a cyclic-shift null, it separates time-localized physiological cues from stationary artifacts. 
Empirically, architectures capture distinct indicators.  \texttt{AGRI-Fidelity} highlights these consensus-supported regions. Although single-model fidelity magnitudes vary across seeds, committee-level directionality remains stable, demonstrating robust consensus evaluation.

\vspace{1mm}
\noindent \textbf{Limitations.}
Our null construction assumes spurious correlations manifest as continuous stationary artifacts, a reasonable assumption in barn environments. While cyclic temporal permutation suppresses such artifacts, it remains vulnerable to transient, time-localized shortcuts (e.g., sensor clicks or microphone disturbances). Because temporal shifting disrupts inter-model overlap, FDR may collapse and incorrectly validate these artifacts as reliable. Addressing non-stationary shortcuts requires extending the null model, for example through frequency-axis permutations or semantic anomaly detection. Another limitation is the lack of standardized spurious-correlation benchmarks in animal disease audio. Our synthetic setup injects high-frequency noise without modeling its statistical structure, enabling controlled validation but limited realism.

\vspace{1mm}
\noindent \textbf{Future Research Directions.}
Future work will develop statistically characterized artifact benchmarks to enable calibrated p-value estimation and principled rejection of unreliable consensus regions. We will extend \texttt{AGRI-Fidelity} with multi-axis null constructions and anomaly-aware modeling to improve robustness against non-stationary artifacts. Because farm sensors operate at varying sampling rates and resampling may discard high-frequency diagnostic cues, we will investigate adaptive multi-resolution modeling to preserve clinically meaningful spectral information while maintaining deployment practicality.

\newpage

\section{Generative AI Use Disclosure}
Generative AI tools were used in a limited capacity to assist with language editing, clarity, and polishing of the manuscript. All technical content, experimental design, theoretical development, analysis, and conclusions were conceived, implemented, and validated by the authors. No generative AI tool was used to produce substantive scientific contributions, generate experimental results, or make methodological decisions. All authors take full responsibility for the content of this paper and consent to its submission.

\bibliographystyle{IEEEtran}
\bibliography{mybib}

\end{document}

%% file: figures/proposed_approach_figure.tex
\usetikzlibrary{shapes.geometric, arrows.meta, positioning, calc, fit, backgrounds}


\resizebox{\textwidth}{!}{%

    \begin{tikzpicture}[
        >=Stealth,
        node distance=1.5cm and 2cm,
        box/.style={rectangle, draw=black!70, thick, rounded corners=3pt, minimum width=2.8cm, minimum height=1.2cm, align=center, fill=gray!10, font=\large},
        process/.style={rectangle, draw=black!70, thick, minimum width=3cm, minimum height=1.2cm, align=center, fill=blue!5, font=\large},
        metric/.style={rectangle, draw=black!70, thick, rounded corners=3pt, minimum width=3.5cm, minimum height=1.2cm, align=center, fill=green!10, font=\large},
        final/.style={rectangle, draw=black!70, thick, rounded corners=6pt, minimum width=3.5cm, minimum height=1.5cm, align=center, fill=purple!15, font=\textbf\large},
        arrow/.style={->, thick, draw=black!80},
        dashed_arrow/.style={->, thick, dashed, draw=black!80},
        group/.style={rectangle, draw=gray!60, thick, dashed, inner sep=10pt, rounded corners=5pt}
    ]

    \node[box] (input) {\textbf{Input Spectrogram}\\$X \in \mathbb{R}^{T \times F}$};

    \node[process, right=1.5cm of input, yshift=1.5cm] (m1) {Model $M_1$\\Explainer $\mathcal{E}(M_1, X)$};
    \node[right=3cm of input, font=\Large] (mdots) {$\vdots$};
    \node[process, right=1.5cm of input, yshift=-1.5cm] (mk) {Model $M_K$\\Explainer $\mathcal{E}(M_K, X)$};

    \draw[arrow] (input.east) -- (m1.west) node[midway, above left, font=\small, align=right] {Extract\\Top-$\kappa$};
    \draw[arrow] (input.east) -- (mk.west) node[midway, below left, font=\small, align=right] {Extract\\Top-$\kappa$};

    \node[group, fit=(m1) (mk) (mdots), label={[font=\large\bfseries, text=black!80]above:1. Committee Generation}] (committee) {};

    \node[process, right=1.7cm of m1, fill=orange!5] (sobs) {\textbf{Observed Consensus}\\$S_{obs}(t,f) = \frac{1}{K}\sum E_k(t,f)$};
    \node[process, right=1cm of sobs, fill=orange!5] (cobs) {\textbf{Stratification}\\$C_{\lambda}(t,f)$, $\lambda \in \Lambda$};

    \node[process, right=1.7cm of mk, fill=orange!5] (snull) {\textbf{Null Consensus} \\ 
    $E_k^{(b)}(t, f) = E_k((f + \Delta_k^{(b)}) \bmod T, t)$ \\[0.02in] $S_{null}^{(b)} = \frac{1}{K}\sum E_k^{(b)}$};
    \node[process, right=1cm of snull, fill=orange!5] (cnull) {\textbf{Stratification}\\$C_{\lambda}^{null, (b)}(t,f)$, $\lambda \in \Lambda$};

    \draw[arrow] (m1.east) -- (sobs.west) node[midway, above, font=\small] {Mask $E_1$};
    \draw[arrow, dashed] (mk.east) -- (sobs.west);
    \draw[arrow] (sobs.east) -- (cobs.west);

    \draw[arrow, dashed] (m1.east) -- (snull.west);
    \draw[arrow] (mk.east) -- (snull.west) node[midway, below, font=\small] {Mask $E_K$};
    \draw[arrow] (snull.east) -- (cnull.west);

    \node[group, fit=(sobs) (cobs) (snull) (cnull), label={[font=\large\bfseries, text=black!80]above:2. Conformal Consensus (Stability)}] (conformal) {};

    \node[metric, right=3.5cm of cobs, yshift=-0cm] (fdr) {\textbf{Empirical FDR}$(\lambda)$, $\lambda \in \Lambda$ \\ 
    \small{(Evaluates feature alignment)}};
    \node[metric, below=1.8cm of fdr] (fidelity) {\textbf{Fidelity Score}\\[1mm] \small{(Impact of feature on Predictions)}};

    \draw[arrow] (cobs.east) -- (fdr.west);
    \draw[arrow] (cnull.east) -- (fdr.south west);

    \draw[dashed_arrow] (mk.south) |- ++(0,-1.5) -| (fidelity.south) node[near start, above, font=\small] {Masks $E_k$};

    \node[final, right=0.2cm of fdr, yshift=-1.45cm] (agri) {\textbf{AGRI-Fidelity}: \\ Mean Fidelity,  Reliability Score};

    \draw[arrow] (fdr.east) -| (agri.north);
    \draw[arrow] (fidelity.east) -| (agri.south);

    \node[group, fit=(fdr) (fidelity) (agri), label={[font=\large\bfseries, text=black!80]above:3. Reliability Evaluation}] (evaluation) {};

    \end{tikzpicture}
}
